\pgfplotsset{compat=newest}
\newtheorem{remark}{Remark}
\declaretheorem[name=Definition]{definition}
\newtheorem{thrm}{Theorem}
\newtheorem{lem}{Lemma}
\DeclarePairedDelimiter\norm{\lVert}{\rVert}
\DeclarePairedDelimiter\bracks{\lbrack}{\rbrack}
\newcommand{\ip}[2]{\left\langle #1, #2 \right\rangle}
\newcommand{\E}[1]{\mathbb{E}\bracks*{#1}}
\newcommand{\algname}{MHLJ}
\newcommand{\mh}{Metropolis-Hastings}
\DeclareMathOperator{\diag}{\mathrm{diag}}
\begin{document}
\title{The Entrapment Problem in Random Walk Decentralized Learning } 


\author{%
  
  \IEEEauthorblockN{Zonghong Liu and Salim El Rouayheb}
  \IEEEauthorblockA{ 
  Department of Electrical and Computer Engineering\\ Rutgers University, New Brunswick, NJ, USA\\
                    Email: \{zonghong.liu, salim.elrouayheb\}@rutgers.edu}

    \and
    \IEEEauthorblockN{Matthew Dwyer}
    \IEEEauthorblockA{ 
    Network, Cyber, and Computational Sciences Division\\
                    DEVCOM Army Research Laboratory, Adelphi, MD, USA\\
                Email: matthew.r.dwyer7.civ@army.mil}     

 \thanks{This work was supported in part by the Army Research Lab (ARL) under Grant W911NF-21-2-0272 and the National Science Foundation (NSF) under Grant CNS-2148182.}}

\maketitle
\vspace{-0.3cm}
\begin{abstract}
This paper explores decentralized learning in a graph-based setting, where data is distributed across nodes. We investigate a decentralized SGD algorithm that utilizes a random walk to update a global model based on local data. Our focus is on designing the transition probability matrix to speed up convergence. While importance sampling can enhance centralized learning, its decentralized counterpart, using the Metropolis-Hastings (MH) algorithm, can lead to the entrapment problem, where the random walk becomes stuck at certain nodes, slowing convergence. To address this, we propose the  \mh\ with L\'{e}vy Jumps (MHLJ) algorithm, which incorporates random perturbations (jumps) to overcome entrapment. We theoretically establish the convergence rate and error gap of MHLJ and validate our findings through numerical experiments.
   
\end{abstract}

\section{Introduction}

Traditional machine learning typically stores and trains models on a single server. This framework struggles with large-scale data and poses privacy leakage issues. These challenges have led to a shift towards researching distributed learning \cite{zinkevich2010parallelized,richtarik2016parallel}. A particularly focused framework is centralized distributed learning, which requires a central server, suffers from a communication bottleneck \cite{praneeth2019scaffold}, and is vulnerable if the central server fails \cite{gupta2021localnewton, guerraoui2018hidden}.  
Decentralized learning models remove the dependency on a central server.
In this paper, we study the setting of decentralized learning via random walks (RWs), as shown in \cref{Learning}. The data needed to train the global model is held by local devices (nodes) in a network. Moreover, no central server is needed to aggregate the local updates performed in each iteration at the nodes \cite{kairouz2021advances}, alleviating the problems of communication bottleneck, privacy, and failures that come with a centralized setting.
The learning task is accomplished by leveraging the local communication links among the devices.   The learning task can be expressed as:
\begin{align}
    \min_{x\in \mathbb{R}^d}\frac{1}{|V|}\sum_{v\in V}f_v(x),\label{obj}
\end{align}
where $f_v$ is the local loss function of $v$, which depends on the local data.
Existing decentralized learning approaches can be categorized into two main categories: gossip algorithms \cite{boyd2005gossip,boyd2006randomized}, which have been extensively studied, and random walk algorithms \cite{johansson2010randomized,ayache2021private}, which have been garnering increasing interest recently.  
\begin{figure}[t]
\centering
\includegraphics[width=0.8\linewidth]{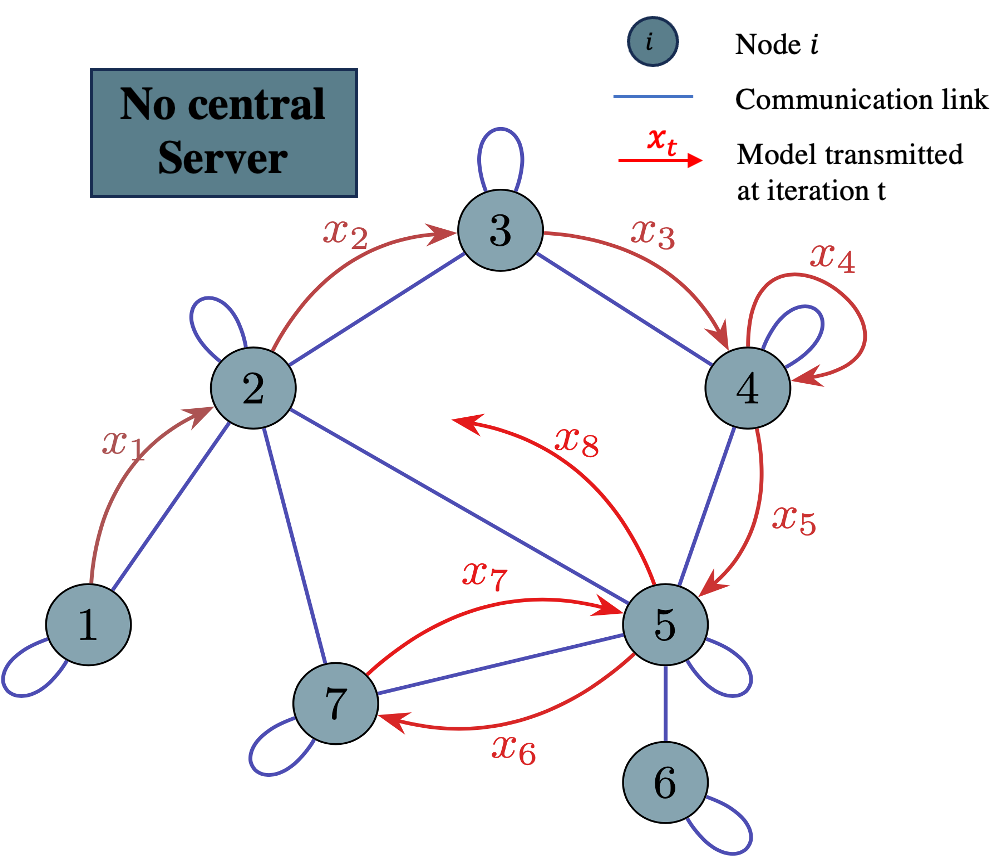}
\caption{\small Decentralized learning via random walk. The model $x$ is carried by a random walk, which is represented by the red arrows. The model is updated using   local data of the visited node in each iteration.}
  \label{Learning}
  \vspace{-0.5cm} 
\end{figure}
In this work, we focus on random walk algorithms due to their overall low communication overhead. We study solving \cref{obj} using a random walk SGD method \cite{robbins1951stochastic, bertsekas2011incremental, nemirovski2009robust}, that is, with the initial model $x^0$, starting  node $v_0$, repeat the following steps: at iteration $t$,  node $v_t$ updates the model using the stochastic gradient $\hat{g}_{v_t}$ calculated based on the local data, and then passes the model to one of its neighbors  randomly chosen according to a transition matrix $P$.

We focus on the effect of designing   $P$ on the algorithm's convergence. Three   choices of   $P$ have been studied. \begin{enumerate}
    \item  The simplest approach is to choose the next node $u$ is chosen uniformly at random: $P(v,u)=\ \frac{1}{\deg(v)}$. However, the stationary distribution of this  RW is proportional to the nodes' degrees.
    \item The most extensively investigated approach is to employ the \mh\ (MH) algorithm  \cite{metropolis1953equation,hastings1970monte} to construct $P$ with the goal of achieving a uniform stationary distribution\cite{johansson2007simple}: $P(v,u)=\frac{1}{\deg(v)}\min\{1, \frac{\deg(v) }{\deg(u) }\}, u\neq v,(u,v)\in E$. This design tries to mimic the vanilla centralized SGD that samples the data uniformly.
    \item For a general desired stationary distribution $\pi$ on the nodes, use the following transition probability obtained from MH \cite{ayache2021private}: \[P(v,u)=\min\left\{\frac{1}{\deg(v)}, \frac{\pi_u}{\deg(u) \pi_v}\right\}, u\neq v,(u,v)\in E.\] 
\end{enumerate}

We are interested in the last option, where the desired distribution $\pi$  is set to be the importance sampling distribution.
 In centralized learning, sampling the data according to their ``importance'' may speed up the convergence \cite{needell2014stochastic}. We show in this work that using the \mh\ algorithm to implement importance sampling in decentralized learning may cause a phenomenon we term the entrapment problem. 
 Namely, the random walk may become entrapped in specific nodes or regions of the graph for an extended duration, thereby slowing down the convergence rate.
 We propose a new design strategy  based on perturbing the \mh\ transition probability $P$ with L\'{e}vy-like jumps \cite{riascos2012long} to overcome  entrapment and show that it can speed up the convergence.
\subsection{Previous Work}
The original work by  \cite{johansson2007simple,johansson2010randomized,lopes2007incremental} marked the initial exploration of random walk learning with the (sub-)gradient method. Subsequently, the work of \cite{ram2009incremental} extended the results to accommodate changing topology networks. 
Utilization of curvature information to accelerate the convergence rate of random walk SGD was studied in \cite{wai2018sucag}. Meanwhile, random walk SGD with adaptive step sizes was investigated in \cite{sun2022adaptive}.
Ergodic sampling for the mirror descent method was explored in \cite{duchi2012ergodic}. Under Markovian sampling, non-convex results for SGD were presented in \cite{sun2018markov}, the AdaGrad method was examined in \cite{dorfman2022adapting}, variance reduction methods applicable to non-convex cases are investigated by \cite{even2023stochastic}.

Another direction relevant to this work is importance sampling. Needell et al. showed in \cite{needell2014stochastic} that sampling the data proportional to the gradient Lipschitz constant 
of the local loss function can speed up the convergence rate when the data is heterogeneous.   The work in  \cite{zhao2015stochastic} connected importance sampling with minimizing the variance of the gradient estimator. Importance sampling for minibatches was studied in \cite{csiba2018importance}. All these works focused on centralized scenarios and did not address the decentralized case we focus on here. 

All the aforementioned work on random walk learning primarily studied scenarios where the random walk's stationary distribution is uniform across the nodes. The work in \cite{ayache2021private,ayache2023walk} went beyond uniform sampling using importance sampling and multi-armed bandits.

\subsection{Contributions}
 
We investigate the impact of designing the transition probability of the random walk on the convergence properties of the random walk learning algorithm. We start by implementing importance sampling in a decentralized learning framework via the \mh\ algorithm. Subsequently, we show that when the data is heterogeneous, and the network is not well-connected, the random walk governed by \mh\ transition may become entrapped at certain ``important'' nodes. We call this phenomenon the entrapment problem. This entrapment phenomenon will force the model to be biased toward the local data and marginalize the updates. 
To mitigate this issue, we propose a novel algorithm, the \mh\ with L\'{e}vy Jump (\algname), which incorporates random perturbations to help the random walk avoid getting entrapped. 
We then analyze the convergence rate and error gap of the \algname\ algorithm, supplementing our theoretical findings with simulations to validate our results.

\subsection{Organization}
 The rest of the paper is organized as follows: Section II introduces the problem setting. Sections III and IV introduce the decentralized way of implementing importance sampling and the entrapment problem. Our proposed algorithm, MHLJ, to overcome the entrapment problem and the simulation results are introduced in Section V. Finally, we give our theoretical convergence result of MHLJ in Section VI.  The complete proof of the theoretical convergence result and the simulation settings can be found in the  Appendix of an extended version of this paper\footnote{https://github.com/ZonghongLiu/ISIT2024-Entrapment-Extended}.
   \section{Problem Setting}
    \subsection{Network and Objective Function}
    We consider a communication network represented by a connected graph $G=(V, E)$, where $V$ is the set of nodes, and $E\subseteq V\times V$ represents the communication links between nodes. Nodes that are connected can communicate with each other. We assume that each node in the graph has a self-loop. Each node $v$ of the network has its local data $x\in R^d$,  which induces a local loss function $f_v(x)$. The goal is to find a decentralized algorithm to solve \cref{obj} using only local communications without the help of a central server.
 The objective function to minimize can be expressed as follows:
\begin{align}
    f(x)=\frac{1}{|V|}\sum_{v\in V}f_v(x),\ x\in \mathbb{R}^d.
\end{align}
\subsection{Data Heterogeneity}
We will work under the Lipschitz smooth assumption: 
\begin{definition}
    A function $f(x)$ is L-smooth if 
    \begin{align*}
       \lVert\nabla f(x)-\nabla f(y)\rVert \leq L\lVert x-y\rVert, \textit{ for all }x,y\in \textit{dom}(f),
    \end{align*}
      where $L$ is the gradient Lipschitz constant of the function.
    \end{definition}
    For example, in linear regression $f_v(x)=\frac{1}{2}\lVert y_v-x^TA_v\rVert^2$ one can set $L_v=\lVert A_v\rVert^2$, and in Logistic regression $f_v(x)=y_v x^T A_v-\log(1+e^{x^T A_v})$ one can set $L_v$  can be chosen as $\frac{1}{4}\lVert A_v\rVert^2$, where $(A_v,y_v)$ is the local data stored at node $v$.
 
     We are interested in the scenario where the data owned by the nodes is heterogeneous, i.e., not sampled from identical distributions. We will look at the gradient Lipschitz constants $L_v$ of the local loss functions $f_v$ as a proxy for heterogeneity.    We denote 
     $L_{\max}=\max\left\{L_v|v\in V\right\}$, $L_{\min}=\min\left\{L_v|v\in V\right\}$, and $\Bar{L}=\frac{1}{|v|}\sum_{v\in V}L_v$.  
     In particular, we consider the following heterogeneous scheme:
    \begin{align}
        L_{\min}\approx\Bar{L}\leq L_{\max}.\label{hetedata}
    \end{align}
  In this case, importance sampling consisting of sampling data proportional to the local gradient Lipschitz constant can lead to a speed-up in convergence \cite{needell2014stochastic} .
\subsection{Random Walk Learning}
We want to design a decentralized algorithm that solves \cref{obj} via a random walk.  
A random walk algorithm  for decentralized optimization \cite{ayache2021private} with a given transition probability matrix $P$ consists of the following steps:
\begin{enumerate}
    \item Start from a randomly selected node $v_0$, with the currently visited initial model $x^0$;
    \item At iteration $t$,  $v_t$ updates the model using the stochastic gradient $\hat{g}_{v_t}$ calculated based on the local data:
    \begin{align}
    x^{t+1}=x^t-\gamma_t\hat{g}_{v_t}(x^t),\label{sgd}
\end{align}  
\item Node $v_t$ randomly chooses one of its neighbors (including itself) as $v_{t+1}$, according to a distribution $P(v_t,\cdot)$.
\item  Node $v_t$ passes the model $x^{t+1}$ to node $v_{t+1}$.
\end{enumerate}
The algorithm runs steps 2), 3), and 4) iteratively for a given number of iterations $T$. The model passed among the nodes and their neighbors can be seen as a time-homogeneous random walk on the graph $G$ with transition matrix $P$. We assume that  $P$ is aperiodic and recurrent. Therefore, the random walk is ergodic and converges to a stationary distribution $\pi$.

\section{Importance Sampling}
Our main objective is to design the transition matrix $P$ of the random walk to speed up the convergence of the learning algorithm. Our approach is to mimic centralized importance sampling, which has been shown to improve convergence in certain regimes \cite{zhao2015stochastic,needell2014stochastic}.  The main challenge is that the data sampled by the random walk is not i.i.d anymore, but is governed by a Markovian dependency imposed by the graph.


\subsection{Importance Sampling in Centralized Learning}
Importance sampling has been mainly studied in the centralized setting. In the standard SGD algorithm, the data is sampled uniformly. Importance sampling goes beyond the uniform distribution and samples the data based on a certain measure of importance, still in  i.i.d. fashion. 
Of particular importance to our work here is the work of Needell et al. \cite{needell2014stochastic}, where it was shown that weighted sampling in a centralized setting can speed up the convergence of SGD. It was proposed to use the gradient Lipschitz constant of the local loss function   $L_i$ as the importance of data $x_i$,  and to sample the data  proportional to its importance, i.e., according to the following distribution: 
\begin{align}
    \pi_{IS}(i)\coloneqq\frac{L_i}{\sum_{i=1}^{N}L_i}, \label{isdist}
\end{align}
where $\pi_{IS}$ is defined to be the importance sampling distribution. The following convergence rates of SGD  for Lipschitz smooth and strongly convex objective functions were shown in \cite[Theorem~2.1]{needell2014stochastic}:
\begin{enumerate}
    \item Uniform Sampling: $\Tilde{\mathcal{O}}(\frac{L_{\max}}{T})$;
    \item Importance Sampling: $\Tilde{\mathcal{O}}(\frac{\Bar{L}^2}{L_{\min}T})$,
\end{enumerate}
where $L_{\max}$, $L_{\min}$,  and $\Bar{L}$ are the maximum, minimum, and average of local gradient Lipschitz constants, respectively. 
From the convergence rate, we see that when \cref{hetedata} holds, i.e., when the gap between $L_{\max}$ and $\Bar{L}$ is significant, and $L_{\min}$ is close to $\Bar{L}$, sampling according to the importance distribution in \cref{isdist} will speed up the convergence  of SGD.
\subsection{Importance Sampling in Decentralized Learning}
\begin{figure}[t]
\centering
\subfloat[]{\begin{tikzpicture}[node distance=2cm, scale=0.5]
    \tikzstyle{node} = [circle, draw, minimum size=1.5em]

    \node[node, fill=green!20!gray,scale=0.75] (A) at (18:2cm) {2};
    \node[node, fill=cyan!50!gray,scale=0.75] (B) at (90:2cm) {1};
    \node[node, fill=green!20!gray,scale=0.75] (C) at (162:2cm) {5};
    \node[node, fill=green!20!gray,scale=0.75] (D) at (234:2cm) {4};
    \node[node, fill=green!20!gray,scale=0.75] (E) at (306:2cm) {3};

    \node [above right=0.01cm and 0.01cm of A] {\small $L_2\approx 1$};
    \node [above=0.01cm of B] {\small $L_1\approx 100$};
    \node [above left=0.01cm and 0.01cm of C] {\small$L_3\approx 1$};
    \node [below left=0.001cm and 0.01cm of D]  {\small$L_4\approx 1$};
    \node [below right=0.01cm and 0.01cm of E]  {\small$L_5\approx 1$};

    \draw[-] (A) to  (B);
    \draw[-] (B) to  (C);
    \draw[-] (C) to  (D);
    \draw[-] (D) to  (E);
    \draw[-] (E) to (A);
\end{tikzpicture}}
\subfloat[]{
    \begin{tikzpicture}[node distance=2cm, scale=0.5]
    \tikzstyle{node} = [circle, draw, minimum size=1.5em]

    \node[node, fill=green!20!gray,scale=0.75] (A) at (18:2cm) {2};
    \node[node, fill=cyan!50!gray,scale=0.75] (B) at (90:2cm) {1};
    \node[node, fill=green!20!gray,scale=0.75] (C) at (162:2cm) {5};
    \node[node, fill=green!20!gray,scale=0.75] (D) at (234:2cm) {4};
    \node[node, fill=green!20!gray,scale=0.75] (E) at (306:2cm) {3};

    \node at (22:2.5cm)  {};
    \node at (90:3cm)  {};
    \node at (157:2.5cm)  {};
    \node at (234:2.5cm)  {};
    \node at (306:2.5cm)  {};

    \draw[->] (A) to[bend left]  node[]{ $\approx \frac{1}{2}$}  (B);
    \draw[->] (B) to[bend left] (C);
    \draw[->] (C) to[bend left] (D);
    \draw[->] (D) to[bend left] (E);
    \draw[->] (E) to[bend left]  (A);

    \draw[->] (B) to[bend left] node[above right] { $\approx\frac{1}{102}$} (A);
    \draw[->] (C) to[bend left] (B);
    \draw[->] (D) to[bend left] (C);
    \draw[->] (E) to[bend left] (D);
    \draw[->] (A) to[bend left] node[right] { $\approx\frac{1}{2}$}(E);

    \draw[->] (A) to[loop right]node[above] { $\approx0$}(A);
    \draw[->] (B) to[loop above] node[above] { $\approx\frac{50}{51}$}(B);
    \draw[->] (C) to[loop left] (C);
    \draw[->] (D) to[loop left] (D);
    \draw[->] (E) to[loop right] (E);
\end{tikzpicture}%
 }
\caption{\small (a) An example of ring topology with five nodes that may cause the entrapment issue.  
(b) In the Markov chain representation of the random walk on the graph in (a).} 
\label{5ring}
\vspace{-0.4cm} 
\end{figure}
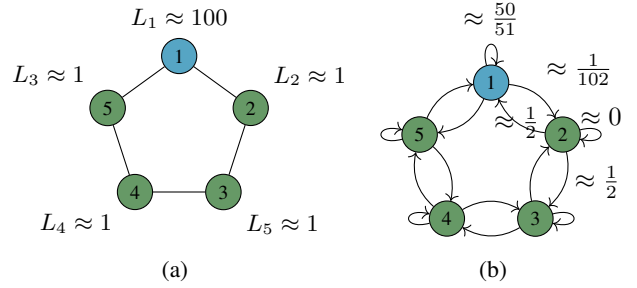

 In a graph-based decentralized setting, there is no central server to implement importance sampling. 
 Ayache et al. \cite{ayache2021private} proposed importance sampling in random walk learning by designing the transition probability  $P$  to achieve a desired stationary distribution $\pi=\pi_{IS}$, which is proportional to $L_i$ as in \cref{isdist}, via \mh\  algorithm. Given a distribution $\pi$, the MH algorithm allows  designing a transition matrix $P$ that has $\pi$ as its stationary distribution:
\begin{align}
            P (i,j)=\left\{\begin{array}{cc}
          Q(i,j)\min\{1, \frac{\pi(j)Q(j,i)}{\pi(i)Q(i,j)}\},& i\neq j,\\
          1-\sum_{k:(i,k)\in E}P(i,k),&i=j,
     \end{array}
     \right. 
        \end{align}
        where $Q$ is any proper transition probability that satisfies the graph structure, i.e., $Q(i,j)=0$ if $(i,j)\notin E$, $Q^k(i,j)>0$ for some  $k$ if there is a path from $i$ to $j$. For example, we can take $Q$ as the simple random walk on the graph, i.e., the neighbors are selected uniformly $Q(i,j)=\frac{1}{\deg(i)}, \forall(i,j)\in E$.

To mimic  importance sampling, i.e., $\pi(i)\propto L_i$,  the transition probability matrix can be chosen to be \cite{ayache2021private}:
 \begin{align}
     P_{IS}(i,j)=\left\{\begin{array}{cc}
          \frac{1}{\deg(i)}\min\{1, \frac{\deg(i) L_j}{\deg(j) L_i}\},& i\neq j,\\
          1-\sum_{k:(i,k)\in E}P(i,k),&i= j.
     \end{array}
     \right. \label{mhtrans}
 \end{align}
\section{The Entrapment Problem}
In certain cases, the \mh\ importance sampling transition given by {\cref{mhtrans}
}
can lead to a degradation in the convergence rate. We show that when the data is heterogeneous, and the graph is not ``well-connected", the random walk moving according to {\cref{mhtrans}} may get entrapped in a local area of the graph, leading to a slowdown in convergence.

We will illustrate our ideas using the example of a ring network with heterogeneous data. \cref{5ring}.a gives such an example where the data are stored over a ring network with $5$ nodes. Here, node $1$  stores the data set that has a much larger gradient Lipschitz constant. 
We show that in this case, the convergence rate of \mh\ importance sampling is dramatically slowed down, as shown in  \cref{mainsim}. The reason is that the random walk is getting entrapped on the ``important'' nodes, i.e., nodes holding data with large $L_i$'s.  
 This forces the algorithm to update the model using the same data a large number of times, pushing the model to converge to the local optimum, thus slowing down convergence. 

To understand the cause of the entrapment problem, notice that the transition probability $P_{IS}$ of the random walk  given in \cref{mhtrans}  satisfies the detailed balanced condition $\pi(i)P_{IS}(i,j)=\pi(j)P_{IS}(j,i)$ \cite{levin2017markov}, which in our case leads to
\begin{align}
     L_i/L_j= P_{IS}(j,i)/P_{IS}(i,j).
\end{align}
Therefore, when a node has much larger local gradient Lipschitz constants than its neighbors', and the graph is sparse, the probability of leaving this node is very small.

We observe that the entrapment problem does not occur only in the ring network but also in other ``sparse'' networks, like $2D$-grids and Watts-Strogatz random graphs.
\section{\algname\ Algorithm}
We propose a new algorithm, Metropolis-Hastings with L\'{e}vy Jumps (\algname), to solve the entrapment problem. The main idea consists of perturbing the Metropolis-Hastings transition probability in \cref{mhtrans} by adding random jumps to escape   a local entrapment.
\begin{algorithm}[
b]
 \caption{Importance Sampling using Metropolis-Hastings with L\'{e}vy Jumps (\algname)}
 \begin{algorithmic}[1]
 \renewcommand{\algorithmicrequire}{\textbf{Input:}}
 \renewcommand{\algorithmicensure}{\textbf{Output:}}
 \REQUIRE $G=(V,E)$, $L_v$ for $v\in V$, $P_{IS}$,  $\gamma$, $T$, $p_{J}$, $p_d$, $r$
 \ENSURE  $x^{T}$
 \\ \textit{Initialisation}: $x^0$, $v_0$,
 \\ \FOR {$t=0,1,T-1$}
  \STATE $x^{t+1}=x^t-\gamma \frac{\Bar{L}}{L_{v_t}}\nabla f_{v_t}(x^t)$\\
  \STATE $J\sim\textit{Ber}(p_{J})$ 
  \IF {$J = 0$}
  \STATE $v_{t+1}$ $\sim$ $P_{IS} (v_{t},\cdot)$
  \ELSE  
  \STATE $d\sim \mathsf{TruncGeom}(p_d, r)$ 
  \WHILE{$d \geq 0$}
  \STATE $v_{t+1}\sim \textit{Unif}(\mathcal{N}_{v_t})$\\
$v_{t} = v_{t+1}$ \\
$d=d-1$
  \ENDWHILE
  \ENDIF  
  \ENDFOR
 \RETURN $x^T$ 
 \end{algorithmic} 
 \end{algorithm}
 The added jump requires no global information on the graph. Each step of the jump requires only local structure information, i.e., the neighbors of the current node. The details are described in Algorithm 1, where $(p_{J},p_d,r)$ are the parameters of the L\'{e}vy jumps, and $\mathcal{N}_v$ is the neighbor set of node $v$.
 \begin{figure}[t]
 {
    \includegraphics[width = 0.48\textwidth]{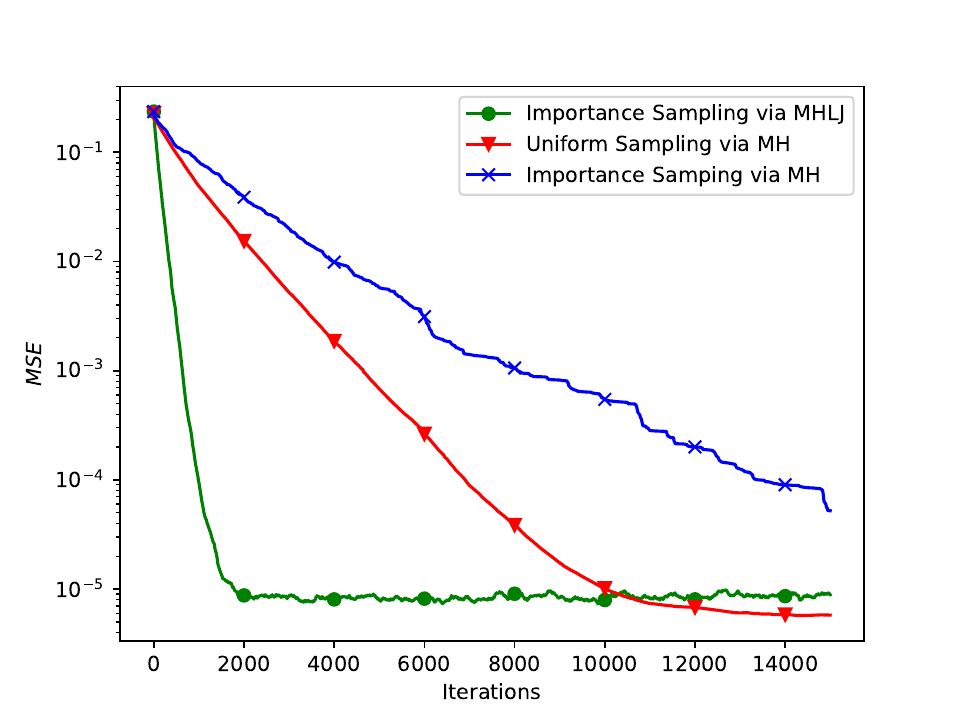}
 }
       \caption{\small Linear regression model $y =A  x +\epsilon$ trained on a synthetic heterogeneous data set over a ring network with 1000 nodes. We compare the uniform sampling, importance sampling, and our Algorithm \algname. The $y$-axis is the mean square error (MSE), i.e., $ \sum_{v\in V}\norm{y_v - A_v\hat x}^2/|V|$. The $x$-axis is the number of iterations with SGD updates, i.e., the number of times  \cref{sgd} is called. We generate the data $A_v$ on node $v$ with $A_v\overset{\mathrm{i.i.d.}}{\sim} N(0,\sigma^2\mathbb{I}_{10})$, where $\sigma^2$ takes value $1$ with probability $p=0.998$ and $100$ with probability $p=0.002$. The noise is generated from $\epsilon\overset{\mathrm{i.i.d.}}{\sim}N(0,1)$. We use the hyper-parameters: $(p_J,p_d,r)=(0.1,0.5,3)$.} 
       \vspace{-0.5cm} 
  \label{mainsim}
\end{figure}
 We compare the performance of Uniform sampling via MH, Importance sampling via MH, and Importance sampling via \algname\ in \cref{mainsim} for the ring network with $1000$ nodes. 
 The simulation results show the following:
\begin{enumerate}
    \item[i.]  \algname\ can break the entrapment and significantly speed up the convergence rate.
    \item[ii.] \algname\ exhibits asymptotically an error gap that we will later explain in our theoretical analysis. 
\end{enumerate}

In \algname\ the random walk determines its next step after each update. Specifically, it either executes a Lévy jump with a probability of $p_J$ or adheres to the \mh\ rule with a probability of $1 - p_J$. 

\textit{L\'{e}vy jump:} When the random walk makes a jump: (a) The random walk chooses how far it should jump. The jumping distance $d$ is sampled from a truncated Geometric (TrunGeom) distribution defined by $P(D=d)=\frac{p_d(1-p_d)^{d-1}}{1-(1-p_d)^{r}}\mathbb{I}\{d\leq r\}$. 
(b) Once the distance $d$ is determined, the model undergoes $d$ consecutive transfers between nodes, wherein it is passed to a uniformly selected neighboring node $d$ times in succession without undergoing any updates. 
The simple random walk strategy employed during the jumps is deliberately designed to disrupt the detailed balance condition, thereby enabling the random walk to escape the entrapping region. As a consequence, the sampling distribution of nodes deviates from the desired importance distribution defined in \cref{isdist}, resulting in an error gap, which will also appear in our convergence result presented in \cref{thm}.

Algorithm 1 induces a time-homogeneous random walk with transition matrix $P$. We view this random walk as a Metropolis-Hastings random walk (with transition matrix $P_I$ defined in \cref{mhtrans}) perturbed by L\'{e}vy jumps, i.e., 
\begin{align*}
    P&=(1-p_{J}) P_I + p_{J}P_{\textit{L\'{e}vy}},\textit{ where}\\
    P_{\textit{L\'{e}vy}}&= \sum_{i=1}^{r}\frac{p_d(1-p_d)^{i-1}}{1-(1-p_d)^{r}}\diag\{A_G^i \mathbf{1}\}^{-1}A_G^i,
    \vspace{-0.5cm}
\end{align*}
where $A_G$ is the adjacency matrix of the given graph $G$.
 The resulting stationary distribution $\pi$ is thus no longer $\pi_{IS}(v)=\frac{L_v}{\sum_{v\in V}L_v}$ but a perturbed version of it. 
 
\begin{remark}[Computation v.s. Communication overheads of \algname]  Each iteration in \algname\ ($x$-axis in \cref{mainsim}) corresponds to one gradient decent update according to \cref{sgd}. \cref{mainsim} shows that \algname\ saves on computation cost since it requires less updates to achieve a given accuracy. However, by adding jumps, we actually admit transitions without updates, which leads to an increase in the communication overhead. For each update, the expected number of transitions (node visits) required can be bounded by
\begin{align*}
    (1-p_J)\cdot 1+p_J\E{d}\leq 1+p_J(\frac{1}{p_d}-1).
\end{align*}
In our example, this upper bound is equal to $1.1$, i.e., at most $10\%$ increase in the average communication cost in our example.
\end{remark}

 \section{Convergence Result}
Now, we give our theoretical convergence result.
\begin{thrm}[Convergence of Algorithm \algname]
    Suppose that each local loss function $f_v$ is  $L_v$-smooth and $\mu$-strongly convex, and $\norm{\nabla f_v(x^*)}^2\leq \sigma_*^2,\ \forall v\in V$, then for $\gamma<\min\{\frac{1}{\Bar{L}},\frac{1}{T\mu}\ln{T\frac{\norm{x^0-x^*}^2\mu^2}{\tau_{mix}\sigma^2_*\Bar{L}}}\}$, the output of Algorithm 1 after $T$ iterations $x^T$
    satisfies:\footnote{$\Tilde{\mathcal{O}}$ hides logarithmic factors.}
    \begin{align}
        \hspace{-0.2cm}\mathbb{E}\norm{x^T-x^*}^2\leq \Tilde{\mathcal{O}}\left(\frac{\Bar{L}^2\tau_{mix} \sigma_*^2}{L_{\min}T}\right)+\mathcal{O}\left({ p_{J}^2\norm{P_{IS}-P_{\textit{L\'{e}vy}}}^2_1 }\right)\label{mainresult}
    \end{align}
    where $\tau_{mix}$ is the mixing time of $P=P_{IS}-p_{J}(P_{IS}-P_{\textit{L\'{e}vy}})$, and $\Bar{L}=\sum_{v\in V}L_v/|V|$.  
    \label{thm}
\end{thrm}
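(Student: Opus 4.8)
The plan is to treat the \algname\ iterate as a stochastic gradient descent run under Markovian sampling with a \emph{biased} gradient oracle: at step $t$ the update direction is $\frac{\bar L}{L_{v_t}}\nabla f_{v_t}(x^t)$, and $v_t$ evolves according to the chain $P=(1-p_J)P_{IS}+p_J P_{\text{L\'evy}}$ whose stationary distribution $\pi$ is a perturbation of $\pi_{IS}$. First I would decompose the per-step expected drift: conditioning on the whole trajectory, $\mathbb{E}\big[\frac{\bar L}{L_{v_t}}\nabla f_{v_t}(x^t)\big]$ converges (as $t$ grows past the mixing time) to $\sum_v \pi_v \frac{\bar L}{L_v}\nabla f_v(x^t)$. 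If $\pi=\pi_{IS}$ this equals $\nabla f(x^t)$ exactly; since $\pi\neq\pi_{IS}$ we write the drift as $\nabla f(x^t)+b(x^t)$ where the bias $b$ satisfies $\|b(x^t)\|\le C\,\|\pi-\pi_{IS}\|_1\,\sup_v\|\nabla f_v(x^t)\|$, and $\|\pi-\pi_{IS}\|_1$ is controlled by $p_J\|P_{IS}-P_{\text{L\'evy}}\|_1$ via a standard perturbation bound for stationary distributions of Markov chains (e.g.\ the Markov-chain tree/fundamental-matrix bound, with the condition number absorbed into the $\mathcal{O}(\cdot)$). This is the source of the second, non-vanishing term in \cref{mainresult}.

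Next I would set up the one-step contraction. Using $L_v$-smoothness and $\mu$-strong convexity of each $f_v$ (hence of $f$), and the reweighting $\bar L/L_v\le \bar L/L_{\min}$, a standard SGD descent computation gives
\begin{align*}
\mathbb{E}\big[\|x^{t+1}-x^*\|^2 \,\big|\, \mathcal F_t\big] \le (1-\gamma\mu)\|x^t-x^*\|^2 + \gamma^2 \Sigma + 2\gamma\,\langle x^t-x^*,\, \text{(Markovian bias)}\rangle + \gamma\,\text{(stationary bias)} ,
\end{align*}
where $\Sigma$ collects the second-moment / variance term and is $\mathcal O\!\big(\tfrac{\bar L^2}{L_{\min}}\sigma_*^2\big)$ after using $\|\nabla f_v(x^*)\|^2\le\sigma_*^2$ and smoothness to bound $\|\nabla f_v(x^t)\|^2$ by a multiple of $\|x^t-x^*\|^2$ plus $\sigma_*^2$; the stationary-bias term is absorbed into the $\mathcal O(p_J^2\|P_{IS}-P_{\text{L\'evy}}\|_1^2)$ piece after a Young's-inequality split. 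The step-size constraint $\gamma<1/\bar L$ is exactly what keeps the effective contraction factor in $(0,1)$ and makes the quadratic-in-$\gamma$ terms dominated by the linear one.

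The crucial and most delicate step is handling the \emph{Markovian} bias term $\langle x^t-x^*,\, \mathbb{E}[\hat g_{v_t}(x^t)\mid\mathcal F_t]-\nabla_\pi f(x^t)\rangle$, which is nonzero because $v_t$ is not drawn from $\pi$ at finite $t$. Here I would use the standard mixing-time technique for Markov-chain SGD: look back $\tau_{mix}$ steps, use geometric mixing of $P$ to bound the conditional-distribution gap by, say, $1/4$ in total variation after $\tau_{mix}$ steps, and control the change of $x^t$ over a window of length $\tau_{mix}$ by $\mathcal O(\gamma\tau_{mix})$ times a bound on the gradients. This produces the $\tau_{mix}$ factor multiplying the $\mathcal O\!\big(\tfrac{\bar L^2\sigma_*^2}{L_{\min}T}\big)$ rate. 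I would then unroll the recursion $u_{t+1}\le(1-\gamma\mu)u_t + \gamma^2 A + (\text{bias})$ over $t=0,\dots,T-1$, sum the geometric series, and optimize/substitute the prescribed $\gamma$ (whose second branch is precisely the value that makes the transient term $(1-\gamma\mu)^T\|x^0-x^*\|^2$ match the $1/T$ statistical term up to logs, yielding the $\tilde{\mathcal O}$ and the hidden log factors). The main obstacle is making the mixing-time argument rigorous while the gradients are only bounded in terms of $\|x^t-x^*\|$ rather than uniformly — this requires either an a priori boundedness argument for the iterates (e.g.\ showing $\mathbb{E}\|x^t-x^*\|^2$ stays bounded by induction) or carrying the $\|x^t-x^*\|^2$ terms through the windowed estimate and closing the recursion self-consistently; I expect to do the latter, choosing $\gamma$ small enough that the extra $\mathcal O(\gamma^2\tau_{mix}^2)$-type terms generated by the look-back are harmless.
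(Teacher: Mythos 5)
Your proposal is essentially correct in spirit but follows a genuinely different route from the paper. You take the classical Markov-chain SGD path: a one-step contraction on $\norm{x^t-x^*}^2$, a look-back-$\tau_{mix}$ coupling argument to control the finite-time Markovian bias, and a stationary-distribution perturbation bound to isolate the $\mathcal{O}(p_J^2\norm{P_{IS}-P_{\textit{L\'{e}vy}}}_1^2)$ error floor. The paper instead adopts the auxiliary-sequence technique of Mishchenko et al.\ and Even: it defines $y^{t}=y^{t-1}-\gamma\frac{\Bar{L}}{L_{v_{t-1}}}\nabla f_{v_{t-1}}(x^*)$, anchors $y^T=x^*$, proves the contraction $\norm{x^{t+1}-y^{t+1}}^2\leq(1-\gamma\mu)\norm{x^t-y^t}^2+\gamma\Bar{L}\norm{y^t-x^*}^2$ (Lemma 1), and thereby reduces everything to bounding partial sums $\E{\lVert\sum_{s}\frac{\Bar{L}}{L_{v_s}}\nabla f_{v_s}(x^*)\rVert^2}$ (Lemma 2), handled by a Markov-chain concentration inequality plus the same Seneta-type perturbation bound $\norm{\pi-\nu}_{TV}\leq C\norm{P_{IS}-P_{\textit{L\'{e}vy}}}_1$ you invoke. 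The decisive advantage of the paper's decomposition is that all stochastic-gradient terms are evaluated at the fixed point $x^*$, where Assumption 3 gives the uniform bound $\sigma_*^2$; this entirely sidesteps the difficulty you correctly flag at the end of your proposal, namely that in your route both the windowed mixing estimate and the bias term $b(x^t)$ scale with $\sup_v\norm{\nabla f_v(x^t)}$, which is only controlled by $L_{\max}\norm{x^t-x^*}+\sigma_*$ and so must be absorbed back into the contraction via a self-consistent boundedness argument (implicitly imposing an additional smallness condition on $p_J^2\norm{P_{IS}-P_{\textit{L\'{e}vy}}}_1^2L_{\max}^2/\mu^2$ that the theorem statement does not carry). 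Your route is workable and more standard, but closing that loop rigorously is the part your sketch leaves open; the auxiliary-sequence argument buys you exactly that closure for free, at the price of the slightly less intuitive backward construction $y^T=x^*$.
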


The first term in \cref{mainresult} implies that the algorithm converges with a sub-linear rate. Here,  $\tau_{mix}$ is the mixing time \cite{levin2017markov} of the random walk and represents the effect of sampling dependency induced by the graph topology. Also, note that $\tau_{mix}$ is smaller than its \mh\ counterpart because making jumps makes the graph better connected. The second term describes the error gap caused by the jumps.
The choice of $p_J$ creates a trade-off between the speed with which the random walk can escape from the entrapment and the magnitude of the error gap expressed in the second term of \cref{mainresult}.  When the value of $p_J$ is small, the random walk experiences difficulty escaping the entrapment, resulting in a slow convergence; conversely, a large value of $p_J$ yields a more substantial error gap.
As for $\norm{P_I-P_{\textit{L\'{e}vy}}}_1$, its value  depends on the graph and the gradient Lipschitz constants, and can be upper bounded by $n^2$. In practice, the error gap can be made arbitrarily small by decreasing $p_J$ as the number of iterations increases.

The proof of \cref{thm} presents two challenges compared to   the standard proof of SGD:
\begin{enumerate}
    \item The stochastic gradient $\nabla f_{v_t}(x^t)$ used in each update step  is not an unbiased estimator of the true gradient due to the graph topology, i.e., $\E{\nabla f_{v_{t}}(x^{t})\mid v_{t-1}}\neq \nabla f(x^{t})$. Thus, each step is not a descent step in expectation as in standard SGD.
    \item The detailed balance equation is violated by the added L\'{e}vy jumps, causing the expectation with respect to the stationary distribution to be   also biased, i.e., $\mathbb{E}_{\pi_I}[\nabla f_v(x^*)]\neq 0$. This  breaks the first order optimality condition .
\end{enumerate}

To address the first challenge, we use  an auxiliary sequence $\{y^t\}_{t=1}^T$  to bound $\norm{x^T-x^*}$ without relying on  the conditional unbiasedness  of the gradient estimate. 
This proof technique was first introduced  in \cite{mishchenko2020random} to study the random reshuffling method,  and then used for the proof of Markovian SGD in \cite{even2023stochastic}. Namely, we construct $\{y^t\}_{t=1}^T$ by letting:
\begin{align}
    y^{t+1}=y^t-\gamma \frac{\Bar{L}}{L_{v_t}} \nabla f_{v_t}(x^*).\label{auxila}
\end{align}
The following lemma controls the distance between $x^t$ and $y^t$ and is adapted   from Lemma 9  in \cite{even2023stochastic} to incorporate the smoothness constants.
\begin{lem}
    For any $\{y^t\}_{t=0}^T$ satisfies \cref{auxila}, we have 
    \begin{align*}
    \norm{x^{t+1}-y^{t+1}}^2 \leq (1-\gamma \mu)\norm{x^t-y^t}^2+\gamma \Bar{L}\norm{y^t-x^*}^2.
    \end{align*}\label{boundcumu1}
    \vspace{-0.5cm}
    \label{lem1}
\end{lem}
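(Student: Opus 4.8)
\textbf{Proof plan for Lemma \ref{lem1}.}

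The plan is to expand $\norm{x^{t+1}-y^{t+1}}^2$ directly using the two update rules \cref{sgd} (with the reweighted step $\gamma\Bar{L}/L_{v_t}$, as in Algorithm 1) and \cref{auxila}, and then apply strong convexity and smoothness of the \emph{single} active function $f_{v_t}$ to control the cross term. Subtracting the two recursions gives $x^{t+1}-y^{t+1}=(x^t-y^t)-\gamma\frac{\Bar{L}}{L_{v_t}}\bigl(\nabla f_{v_t}(x^t)-\nabla f_{v_t}(x^*)\bigr)$. Taking squared norms and abbreviating $g\coloneqq \nabla f_{v_t}(x^t)-\nabla f_{v_t}(x^*)$, $\eta\coloneqq \gamma\Bar{L}/L_{v_t}$, we get
\begin{align*}
\norm{x^{t+1}-y^{t+1}}^2=\norm{x^t-y^t}^2-2\eta\,\ip{x^t-y^t}{g}+\eta^2\norm{g}^2.
\end{align*}
First I would split the inner product as $\ip{x^t-x^*}{g}+\ip{x^*-y^t}{g}$, so that the first piece can be attacked with co-coercivity/strong convexity of $f_{v_t}$ and the second is a pure cross term to be Young-inequality'd against the $\eta^2\norm{g}^2$ term.

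The key steps, in order: (i) use $\mu$-strong convexity of $f_{v_t}$ to bound $-2\eta\ip{x^t-x^*}{g}\le -2\eta\mu\norm{x^t-x^*}^2$ combined with an $L_{v_t}$-smoothness (co-coercivity) bound $\ip{x^t-x^*}{g}\ge \frac{1}{L_{v_t}}\norm{g}^2$ to simultaneously soak up the $\eta^2\norm{g}^2$ term; since $\eta=\gamma\Bar{L}/L_{v_t}$ and $\gamma\le 1/\Bar{L}$ we have $\eta\le 1/L_{v_t}$, which is exactly what makes $\eta^2\norm{g}^2-\frac{2\eta}{L_{v_t}}\norm{g}^2\le 0$ (or more precisely lets the residual gradient term be absorbed). (ii) Apply Young's inequality to the remaining cross term $-2\eta\ip{x^t-y^t}{g}$ split appropriately, or alternatively relate $\norm{x^t-x^*}^2$ back to $\norm{x^t-y^t}^2$ and $\norm{y^t-x^*}^2$ via $\norm{x^t-x^*}^2\ge \tfrac12\norm{x^t-y^t}^2-\norm{y^t-x^*}^2$ type inequalities (this is where the $\gamma\Bar{L}\norm{y^t-x^*}^2$ slack term is generated). (iii) Collect terms so that the $\norm{x^t-y^t}^2$ coefficient becomes $(1-\gamma\mu)$ — this forces the choice of how to distribute $\eta\mu$ across the two uses of the strong-convexity bound, using $\eta\ge \gamma\mu/\mu\cdot(\Bar{L}/L_{v_t})$ and the fact that one only needs $\Bar{L}/L_{v_t}\ge$ something to keep $\gamma\mu\le \eta\mu$ when $L_{v_t}\le\Bar{L}$; the heterogeneous case $L_{v_t}>\Bar{L}$ needs a slightly different bookkeeping but the $\Bar{L}$ in the slack term is precisely tuned to cover it.

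The main obstacle I anticipate is step (iii): getting the clean coefficient $(1-\gamma\mu)$ on $\norm{x^t-y^t}^2$ rather than $(1-\gamma\mu \Bar{L}/L_{v_t})$ or similar, uniformly over all nodes $v_t$ (both when $L_{v_t}\le\Bar{L}$ and $L_{v_t}>\Bar{L}$), while making sure the leftover positive term is controlled by exactly $\gamma\Bar{L}\norm{y^t-x^*}^2$ and not something with a worse constant. This is the place where the reweighting by $\Bar{L}/L_{v_t}$ and the step-size constraint $\gamma\le 1/\Bar{L}$ have to interact just right; I'd handle it by being careful to keep $\eta\norm{g}^2$ terms around as long as possible (they are "free" negative terms from co-coercivity) and only discarding them at the very end. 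Since the statement says the lemma is adapted from Lemma 9 of \cite{even2023stochastic} with smoothness constants inserted, I would cross-check the final constants against that reference to confirm no factor of $2$ is lost.
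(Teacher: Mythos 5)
Your overall route is the standard one and is exactly the template the paper points to: the paper itself gives no proof of this lemma (in the appendix it is simply imported, with the weights $\Bar{L}/L_{v_t}$ inserted, from Lemma~9 of \cite{even2023stochastic}), and that template is precisely ``subtract the two recursions, expand the square, split the inner product through $x^*$, use strong convexity plus co-coercivity of $f_{v_t}$ on $\ip{x^t-x^*}{g}$, and Young's inequality on $\ip{x^*-y^t}{g}$.'' So there is no disagreement of approach to report.

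The genuine gap is the step you yourself flag as ``slightly different bookkeeping'' and then do not carry out, and it does not go through as described. The only source of contraction in this argument is the term $-2\eta\,\theta\,\mu\norm{x^t-x^*}^2$ with $\eta=\gamma\Bar{L}/L_{v_t}$ and $\theta\in(0,1)$ the fraction of $\ip{x^t-x^*}{g}$ you spend on strong convexity rather than on co-coercivity (you cannot use both bounds on the full inner product simultaneously). After converting $\norm{x^t-x^*}^2\geq\tfrac12\norm{x^t-y^t}^2-\norm{y^t-x^*}^2$, the coefficient of $\norm{x^t-y^t}^2$ is $1-c\,\gamma\mu\,\Bar{L}/L_{v_t}$ with $c\leq 1$. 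At a heavy node, $L_{v_t}\gg\Bar{L}$, this is $1-c\,\gamma\mu\,\Bar{L}/L_{v_t}\gg 1-\gamma\mu$, i.e.\ strictly weaker contraction than claimed, by a factor that is unbounded in the heterogeneity. The $\gamma\Bar{L}\norm{y^t-x^*}^2$ slack term cannot repair this: it multiplies a different quantity, and no uniform inequality lets a $\norm{y^t-x^*}^2$ surplus absorb a deficit in the $\norm{x^t-y^t}^2$ coefficient. (There is also less room than you suggest in absorbing $\eta^2\norm{g}^2$: with $a=L_{v_t}$ in Young's inequality, which is what produces the coefficient $\gamma\Bar{L}$ on the slack term, the budget $\eta+1/a\leq 2(1-\theta)/L_{v_t}$ forces $\theta\leq(1-\gamma\Bar{L})/2$, which vanishes at $\gamma=1/\Bar{L}$.) The honest conclusion is that your plan proves $\norm{x^{t+1}-y^{t+1}}^2\leq(1-c\,\gamma\mu\,\Bar{L}/L_{\max})\norm{x^t-y^t}^2+C\gamma\Bar{L}\norm{y^t-x^*}^2$, not the statement with the node-independent factor $(1-\gamma\mu)$; to get the latter one must either assume the \emph{weighted} functions $\frac{\Bar{L}}{L_v}f_v$ are themselves $\mu$-strongly convex or weaken the stated constant. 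Since the paper supplies no proof, this is as much a latent issue in the lemma's statement as in your argument, but your write-up should not present the $(1-\gamma\mu)$ coefficient as reachable by ``bookkeeping.''
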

By setting $y^T=x^*$, we can upper bound $\norm{x^t-x^*}^2$ by
\begin{multline}
        \E{\norm{x^T-x^*}^2}
        \leq 2(1-\gamma\mu)^T\norm{x^0-x^*}^2\\
        +3\gamma^3\Bar{L}\sum_{t\leq T}(1-\gamma\mu)^{T-t}\underbrace{\E{\left\Vert\sum_{t\leq s\leq T}\frac{L_{v_s}}{\Bar{L}}\nabla f_{v_s}(x^*)\right\Vert^2}}_{\textit{Accumulated error term}}.\label{cumsum}
\end{multline}
In the case of an MH random walk with no jumps, the accumulated error term in \cref{cumsum} should converges to zero as $T\to \infty$ due to ergodicity. To address the second challenge,   we prove in  \cref{lem2} an upper bound on  the accumulated error term for \algname. 

\begin{lem}
  For $1\leq s\leq t\leq T$, we have 
    \begin{align*}
        &\E{\left\Vert\sum_{i=s}^t\frac{L_{v }}{\Bar{L}}\nabla f_{v_i}(x^*)\right\Vert^2}\\&\leq (t-s)C\tau_{mix}\sigma_*^2+2(t-s)^2p_J^2\norm{P_{IS}-P_{\textit{L\'{e}vy}}}_{1}^2\sigma^2_* \left(\frac{\Bar{L}}{L_{\min}}\right)^2.
    \end{align*}
   \label{lem2}
\end{lem}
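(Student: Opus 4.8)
The plan is to bound the accumulated error term by separating it into a "near-diagonal" part that captures short-range correlations along the chain and an "off-diagonal" part that captures the systematic bias introduced by the Lévy jumps. Write $g_i \coloneqq \frac{L_{v_i}}{\bar L}\nabla f_{v_i}(x^*)$, so that the quantity to bound is $\E{\norm{\sum_{i=s}^t g_i}^2} = \sum_{i=s}^t \E{\norm{g_i}^2} + 2\sum_{s\le i<j\le t}\E{\ip{g_i}{g_j}}$. Each $\norm{g_i}^2 = (L_{v_i}/\bar L)^2\norm{\nabla f_{v_i}(x^*)}^2$ is bounded pointwise by $(\bar L/L_{\min})^2\sigma_*^2$ after using $L_{v_i}\le \bar L\cdot(\bar L/L_{\min})$... wait, more simply $L_{v_i}/\bar L$ is bounded so $\norm{g_i}^2\lesssim \sigma_*^2$; these contribute the $O((t-s)\sigma_*^2)$ piece, absorbed into the first term.

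First I would handle the cross terms via the Markov property. Conditioning on $v_i$ and using the tower rule, $\E{\ip{g_i}{g_j}} = \E{\ip{g_i}{\E{g_j\mid v_i}}}$, and $\E{g_j\mid v_i} = \sum_v (P^{\,j-i})(v_i,v)\,\frac{L_v}{\bar L}\nabla f_v(x^*)$. The key is that this conditional expectation does not converge to $0$ (as it would for a detailed-balanced chain where $\E_{\pi_{IS}}[(L_v/\bar L)\nabla f_v(x^*)] = 0$ by first-order optimality after the importance reweighting), but rather to a nonzero limit $b \coloneqq \sum_v \pi(v)\frac{L_v}{\bar L}\nabla f_v(x^*)$, where $\pi$ is the stationary law of the perturbed chain $P$. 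So I would write $\E{g_j\mid v_i} = b + \big(\E{g_j\mid v_i} - b\big)$ and split accordingly: the $b$-contribution gives a term of order $(t-s)^2\norm{b}^2$, while the remainder decays geometrically in $j-i$ at the mixing-time rate, summing to $O((t-s)\tau_{mix})$ after a standard geometric-series estimate $\sum_{k\ge 1}\norm{(P^k - \mathbf{1}\pi^\top)(v_i,\cdot)}_{TV}\lesssim \tau_{mix}$.

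The remaining task is to bound $\norm{b}$ by $p_J\norm{P_{IS}-P_{\text{L\'evy}}}_1\sigma_*(\bar L/L_{\min})$. Here I would use that $\pi$ is the stationary distribution of $P = P_{IS} - p_J(P_{IS}-P_{\text{L\'evy}})$ while $\pi_{IS}$ is stationary for $P_{IS}$, and that for the unperturbed chain $\sum_v \pi_{IS}(v)\frac{L_v}{\bar L}\nabla f_v(x^*) = \frac{1}{|V|}\sum_v \nabla f_v(x^*) = \nabla f(x^*)$ — hmm, this is not zero in general either; but one can center using $\nabla f(x^*)=0$ if $x^*$ is the global minimizer, or carry it as a harmless additive constant. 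The perturbation bound $\norm{\pi - \pi_{IS}}_1 \le p_J\,\norm{P_{IS}-P_{\text{L\'evy}}}_1 / (1-\lambda)$-type inequality (with the spectral gap hidden in $\tau_{mix}$, or more crudely just $\norm{\pi-\pi_{IS}}_1 \lesssim p_J\norm{P_{IS}-P_{\text{L\'evy}}}_1$ on a finite chain) then yields $\norm{b}\le \norm{\pi-\pi_{IS}}_1\cdot\max_v\big\|\frac{L_v}{\bar L}\nabla f_v(x^*)\big\| \le p_J\norm{P_{IS}-P_{\text{L\'evy}}}_1\sigma_*\frac{\bar L}{L_{\min}}$, and squaring gives the stated $2(t-s)^2 p_J^2\norm{P_{IS}-P_{\text{L\'evy}}}_1^2\sigma_*^2(\bar L/L_{\min})^2$.

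The main obstacle I expect is making the perturbation bound on $\norm{\pi - \pi_{IS}}_1$ clean enough to expose exactly the factor $p_J\norm{P_{IS}-P_{\text{L\'evy}}}_1$ without dragging in an extra $1/(1-\lambda)$ or $\tau_{mix}$ factor that would spoil the claimed form — this likely requires a careful one-step coupling or a telescoping argument exploiting that $P-P_{IS}$ has operator $1$-norm $p_J\norm{P_{IS}-P_{\text{L\'evy}}}_1$ and that both chains share enough structure (e.g. writing $\pi - \pi_{IS} = \pi(P - P_{IS}) + (\pi - \pi_{IS})P_{IS}$ and iterating). A secondary subtlety is bookkeeping the constant $C$ and confirming that the near-diagonal geometric sum really is controlled by $\tau_{mix}$ uniformly over the starting state $v_s$ (which needs the walk to be close to stationarity, handled by the standard submultiplicativity of mixing).
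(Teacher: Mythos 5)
Your proposal is correct and is essentially the paper's own argument: both decompose the accumulated error into a fluctuation part controlled by $\tau_{mix}$ and the squared stationary bias $b=\mathbb{E}_{\nu}[w(v)\nabla f_v(x^*)]$ of the perturbed chain, and both bound $\norm{b}$ by comparing $\nu$ to $\pi_{IS}$ via a stationary-distribution perturbation inequality of the form $\norm{\nu-\pi_{IS}}_{TV}\le C\norm{P-P_{IS}}_1=Cp_J\norm{P_{IS}-P_{\textit{L\'{e}vy}}}_1$ (the paper cites Seneta's perturbation bound and absorbs the conditioning/spectral-gap factor you worry about into the constant, which is why its displayed constant ``$2$'' is really $2C^2$). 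The only cosmetic differences are that you re-derive the fluctuation bound by expanding cross terms and summing the geometric decay of $P^k$ toward $\mathbf{1}\nu^\top$, where the paper normalizes by $(t-s)$ and invokes a Markov-chain variance bound of Naor et al.; and the weight in the statement is a typo for $w(v)=\Bar{L}/L_v$ (as written in the appendix), under which $\mathbb{E}_{\pi_{IS}}[w(v)\nabla f_v(x^*)]=\nabla f(x^*)=0$ holds exactly and resolves your hesitation about first-order optimality.
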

 Lemmas \cref{boundcumu1}  and \cref{lem2} serve as essential blocks for completing the proof of Theorem \cref{thm}.

\IEEEtriggeratref{20}
\nocite{mao2020walkman,hendrikx2023principled,jiang2023facegroup, naor2020concentration,seneta1988perturbation,nedic2018network}
\bibliographystyle{IEEEtran}
\bibliography{bibliofile}

\IEEEtriggeratref{4}

\newpage
\appendix
\section{Proofs}
In this section, we prove the convergence results for \algname \ for $L$-Lipschitz and strongly convex objective functions, and show the simulation results for our work. 
\subsection{Assumptions and Formal Statement of the Main Theorem}
We make the following general assumptions on the local loss functions.

\textit{Assumption 1.}
    Local Lipschitz smoothness:
    \begin{align*}
        \norm{\nabla f_v(x)-\nabla f_v(y)}\leq L_v\norm{x-y},\ \forall x,y\in\mathcal{X}, \forall v\in V.
    \end{align*}
    
\textit{Assumption 2.}
    Local strong convexity:
    \begin{align*}
        f_v(y)-f_v(x)\geq \ip{\nabla f_v(x)}{y-x}+\mu \norm{y-x}^2,\ \forall v\in V.
    \end{align*}
    
\textit{Assumption 3.}
    Bounded norm of the local gradient at the global optimum:
    \begin{align*}
        \norm{\nabla f_v(x^*)}^2\leq \sigma_*^2,\ \forall v\in V,
    \end{align*}
    where $x^*$ is the minimizer of (2).

In Algorithm 1, the update is given by 
\begin{align}
    x^{t+1}=x^t-\gamma \nabla \left(\frac{\Bar{L}}{L_{v_t}}f_{v_t}(x^t)\right).\label{update}
\end{align}
Where the transition probability can be seen as the \mh\ transition probability $P_{IS}$ perturbed by a L\'{e}vy jumping transition matrix $P_{\textit{L\'{e}vy}}$, whose closed form is know
\begin{align*}
    P_{\textit{L\'{e}vy}}= \sum_{i=1}^{r}\frac{p_d(1-p_d)^{i-1}}{1-(1-p_d)^{r}}\diag\{A_G^i \mathbf{1}\}^{-1}A_G^i
\end{align*}
\subsection{Intermediate Lemmas}
To prove Theorem 1, we follow the idea in \cite{even2023stochastic}. We construct an auxiliary sequence $\{y^t\}_{t=0}^T$ by:
\begin{align}
   y^{t}&=y^{t-1}-\gamma \frac{\Bar{L}}{L_{v_{t-1}}}\nabla f_{v_{t-1}}(x^*).\label{auxiliary}
\end{align}
Note that, given a value of any $y^s,s\in[T]$, the sequence $\{y^t\}_{t=0}^T$ is determined by $\{v_t\}_{t=0}^T$. By setting $y^T=x^*$, we can bound the distance between $x^T$ and $x^*$.

The following lemma from \cite{even2023stochastic} controls the distance between the two sequences. 
\begin{lem}
    If $x^t$ is generated from (\ref{update}), for any $\{y^t\}_{t=0}^T$ satisfies (\ref{auxiliary}), we have 
    \begin{align*}
    \norm{x^{t+1}-y^{t+1}}^2&\leq (1-\gamma \mu)\norm{x^t-y^t}^2\\
    &+\gamma \Bar{L}\norm{y^t-x^*}^2.
    \end{align*}\label{boundcumu}
\end{lem}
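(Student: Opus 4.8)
The plan is to subtract the two update recursions and expand the squared norm. From \eqref{update} and \eqref{auxiliary} we get
\[
x^{t+1}-y^{t+1} = (x^t-y^t) - \gamma\frac{\bar L}{L_{v_t}}\bigl(\nabla f_{v_t}(x^t)-\nabla f_{v_t}(x^*)\bigr).
\]
So $\norm{x^{t+1}-y^{t+1}}^2 = \norm{x^t-y^t}^2 - 2\gamma\frac{\bar L}{L_{v_t}}\ip{x^t-y^t}{\nabla f_{v_t}(x^t)-\nabla f_{v_t}(x^*)} + \gamma^2\frac{\bar L^2}{L_{v_t}^2}\norm{\nabla f_{v_t}(x^t)-\nabla f_{v_t}(x^*)}^2$. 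The first step is therefore to control the cross term and the quadratic term using the structural assumptions on $f_{v_t}$.

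For the cross term I would split $x^t-y^t = (x^t-x^*) - (y^t-x^*)$. The piece $\ip{x^t-x^*}{\nabla f_{v_t}(x^t)-\nabla f_{v_t}(x^*)}$ is handled by $L_{v_t}$-smoothness together with $\mu$-strong convexity of $f_{v_t}$: the standard coercivity inequality gives a lower bound of the form $\frac{\mu L_{v_t}}{\mu+L_{v_t}}\norm{x^t-x^*}^2 + \frac{1}{\mu+L_{v_t}}\norm{\nabla f_{v_t}(x^t)-\nabla f_{v_t}(x^*)}^2$, which simultaneously produces the contraction factor and a negative quadratic term that absorbs the $\gamma^2$ term (using $\gamma\le 1/\bar L\le 1/\mu$ type bounds and $\bar L/L_{v_t}\le \bar L/L_{\min}$, or more simply just $\mu$-strong convexity alone to get $\ge \mu\norm{x^t-x^*}^2$ which after Young suffices). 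The remaining piece $-2\gamma\frac{\bar L}{L_{v_t}}\ip{y^t-x^*}{\nabla f_{v_t}(x^t)-\nabla f_{v_t}(x^*)}$ with $\norm{\nabla f_{v_t}(x^t)-\nabla f_{v_t}(x^*)}\le L_{v_t}\norm{x^t-x^*}$ (smoothness) and Young's inequality $2ab\le \mu a^2 + \frac1\mu b^2$ yields something like $\gamma\mu\norm{x^t-x^*}^2 + \gamma\frac{\bar L^2}{\mu L_{v_t}}\cdot(\dots)\norm{y^t-x^*}^2$; the bookkeeping has to be arranged so the net coefficient on $\norm{x^t-x^*}^2$ is nonpositive (letting us drop it, since we only want $\norm{x^t-y^t}^2$ on the right) and the coefficient on $\norm{y^t-x^*}^2$ comes out as $\gamma\bar L$. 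One has to be a little careful that I am bounding in terms of $\norm{x^t-y^t}^2$ on the RHS, not $\norm{x^t-x^*}^2$; so after getting a bound of the form $(1-c_1\gamma)\norm{x^t-x^*}^2 + c_2\gamma\norm{y^t-x^*}^2$-ish I would re-expand $\norm{x^t-x^*}^2$ around $y^t$ using $\norm{x^t-x^*}^2\le (1+\theta)\norm{x^t-y^t}^2+(1+1/\theta)\norm{y^t-x^*}^2$, tune $\theta=O(\gamma\mu)$, and collect terms.

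The main obstacle I anticipate is the constant-chasing in the Young's-inequality step: making sure that all the cross terms involving $\norm{x^t-x^*}^2$ can be made to cancel (so that only $\norm{x^t-y^t}^2$ and $\norm{y^t-x^*}^2$ survive) requires the right split of the $-2\gamma\mu\ip{x^t-y^t}{\cdot}$-type contributions and exploiting the factor $\bar L/L_{v_t}$ carefully — in particular using that $\gamma\bar L\le 1$ so that $\gamma^2$ terms are dominated by $\gamma$ terms, and that the $\mu$-strong-convexity term is large enough to pay for everything at first order in $\gamma$. Since the statement is quoted verbatim from \cite{even2023stochastic} (adapted to carry the $L_v$ constants), I expect no conceptual difficulty beyond this arithmetic, and the proof should be three or four lines of inequalities once the right Young splits are chosen.
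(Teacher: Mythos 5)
The paper does not actually write out a proof of this lemma --- it imports it verbatim (with the weights $\bar L/L_{v_t}$ inserted) from Lemma~9 of \cite{even2023stochastic} --- so the comparison here is against the standard argument in that reference. Your overall strategy (subtract the two recursions, expand the square, and spend smoothness, strong convexity and Young's inequality on the cross and quadratic terms) is exactly the right family of argument, and your difference equation $x^{t+1}-y^{t+1}=(x^t-y^t)-\gamma\frac{\bar L}{L_{v_t}}\bigl(\nabla f_{v_t}(x^t)-\nabla f_{v_t}(x^*)\bigr)$ is correct. However, two of the specific steps you sketch would not produce the stated constants. First, the closing move --- obtaining a bound in terms of $\norm{x^t-x^*}^2$ and then ``re-expanding'' via $\norm{x^t-x^*}^2\le(1+\theta)\norm{x^t-y^t}^2+(1+1/\theta)\norm{y^t-x^*}^2$ with $\theta=O(\gamma\mu)$ --- cannot work: the $(1+1/\theta)$ factor attaches an $O(1/(\gamma\mu))$ coefficient to $\norm{y^t-x^*}^2$, which is incompatible with the $\gamma\bar L$ coefficient the lemma asserts. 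The contraction factor must come from applying strong convexity \emph{directly between $x^t$ and $y^t$}, not between $x^t$ and $x^*$ followed by a norm conversion. Second, your Young split with parameter $\mu$ on the term $\ip{y^t-x^*}{\nabla f_{v_t}(x^t)-\nabla f_{v_t}(x^*)}$ yields a coefficient of order $\gamma\bar L^2/\mu$ on $\norm{y^t-x^*}^2$, not $\gamma\bar L$; the correct parameter is $L_{v_t}$, and the resulting leftover $\norm{g}^2/L_{v_t}$ term must then be paid for out of the cocoercivity budget, which is exactly sufficient when $\gamma\le 1/\bar L$.

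The clean way to organize all of this (and the way the cited lemma is proved) is the three-point identity for the Bregman divergence $D_{f_{v_t}}(x,y)=f_{v_t}(x)-f_{v_t}(y)-\ip{\nabla f_{v_t}(y)}{x-y}$:
\begin{align*}
\ip{\nabla f_{v_t}(x^t)-\nabla f_{v_t}(x^*)}{x^t-y^t}=D_{f_{v_t}}(y^t,x^t)+D_{f_{v_t}}(x^t,x^*)-D_{f_{v_t}}(y^t,x^*),
\end{align*}
then lower-bound $D_{f_{v_t}}(y^t,x^t)\ge\frac{\mu}{2}\norm{y^t-x^t}^2$ (this gives the contraction in $\norm{x^t-y^t}^2$ directly), lower-bound $D_{f_{v_t}}(x^t,x^*)\ge\frac{1}{2L_{v_t}}\norm{\nabla f_{v_t}(x^t)-\nabla f_{v_t}(x^*)}^2$ (this absorbs the $\gamma^2$ term once $\gamma\bar L\le1$), and upper-bound $D_{f_{v_t}}(y^t,x^*)\le\frac{L_{v_t}}{2}\norm{y^t-x^*}^2$, which after multiplication by $2\gamma\bar L/L_{v_t}$ gives exactly $\gamma\bar L\norm{y^t-x^*}^2$. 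One further point your sketch glosses over (as does the lemma statement itself): the reweighted function $\frac{\bar L}{L_{v_t}}f_{v_t}$ is only $\frac{\bar L\mu}{L_{v_t}}$-strongly convex, so the contraction factor that actually comes out is $1-\gamma\mu\bar L/L_{v_t}$, which coincides with $1-\gamma\mu$ only when $L_{v_t}\le\bar L$; treating the modulus as $\mu$ throughout silently assumes this away.
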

Our main contribution is the following Lemma on bounding the accumulated $L_2$-norm of $\nabla f_{v_t}(x^*)$ when the random walk moves according to a perturbed transition probability. 
\begin{lem}
Let $P=(1-p_J)P_{IS}+p_{J}P_{\textit{L\'{e}vy}}$.  For $1\leq s\leq t\leq T$, we have 
    \begin{align*}
        &\E{\left\Vert\sum_{i=s}^tw(v_i)\nabla f_{v_i}(x^*)\right\Vert^2}\\&\leq (t-s)C\tau_{mix}\sigma_*^2+2(t-s)^2p_J^2\norm{P_{IS}-P_{\textit{L\'{e}vy}}}_{1}^2\sigma^2_* w_{max}^2,
    \end{align*}
    where $\tau_{mix}$ is the mixing time of chain $P$, $\nu$ is the stationary distribution of $P$, $\pi(v)=\frac{L_v}{\sum L_v}$, $w(v)=\frac{\Bar{L}}{L_v}$, and $w_{max} =\frac{\Bar{L}}{L_{min}}$. 
\end{lem}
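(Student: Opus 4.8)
The plan is to control the accumulated sum $\E{\norm{\sum_{i=s}^t w(v_i)\nabla f_{v_i}(x^*)}^2}$ by expanding the square into diagonal and off-diagonal terms and then exploiting the mixing properties of the chain $P=(1-p_J)P_{IS}+p_J P_{\textit{L\'evy}}$. Write $g_i \coloneqq w(v_i)\nabla f_{v_i}(x^*)$. Then
\begin{align*}
\E{\Bigl\Vert\sum_{i=s}^t g_i\Bigr\Vert^2} = \sum_{i=s}^t \E{\norm{g_i}^2} + 2\sum_{s\le i<j\le t}\E{\ip{g_i}{g_j}}.
\end{align*}
The diagonal terms are bounded immediately using Assumption 3 and $w(v_i)\le w_{\max}$, giving $\sum_i \E{\norm{g_i}^2}\le (t-s+1)w_{\max}^2\sigma_*^2$. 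The crux is the off-diagonal terms: I would condition on $v_i$ and write $\E{\ip{g_i}{g_j}} = \E{\ip{g_i}{\E{g_j\mid v_i}}}$, where $\E{g_j\mid v_i} = \sum_{v} P^{\,j-i}(v_i,v)\,w(v)\nabla f_v(x^*)$ by the Markov property. The key decomposition is to compare this conditional expectation against its value under the stationary distribution $\nu$ of $P$: define $\bar g \coloneqq \sum_v \nu(v) w(v)\nabla f_v(x^*) = \E_\nu[g]$. Then split $\E{g_j\mid v_i} - \bar g$ into a term controlled by mixing, $\sum_v (P^{\,j-i}(v_i,v)-\nu(v))w(v)\nabla f_v(x^*)$, which decays geometrically in $j-i$ relative to $\tau_{mix}$, and the residual $\bar g$ itself, which is \emph{not} zero here because the L\'evy jumps break detailed balance — this is exactly the source of the second (error-gap) term.

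Concretely, I would handle the mixing part via the standard fact that $\sum_v |P^{k}(v_i,v)-\nu(v)| \le 2\cdot 2^{-\lfloor k/\tau_{mix}\rfloor}$ (or the analogous $d(k)$ bound from \cite{levin2017markov}), so that $\norm{\sum_v (P^{j-i}(v_i,v)-\nu(v))w(v)\nabla f_v(x^*)} \le w_{\max}\sigma_*\cdot 2\cdot 2^{-\lfloor (j-i)/\tau_{mix}\rfloor}$; summing the geometric series over $i<j$ contributes $O((t-s)\,\tau_{mix}\,w_{\max}^2\sigma_*^2)$, which absorbs into the first term with an absolute constant $C$. For the residual part involving $\bar g$, I would bound $\norm{\bar g}$ by estimating how far $\nu$ is from $\pi=\pi_{IS}$: since under $\pi$ one has $\sum_v \pi(v) w(v)\nabla f_v(x^*) = \frac{1}{|V|}\sum_v \nabla f_v(x^*) = \nabla f(x^*) = 0$, the quantity $\bar g$ equals $\sum_v(\nu(v)-\pi(v))w(v)\nabla f_v(x^*)$, whose norm is at most $w_{\max}\sigma_*\norm{\nu-\pi}_1$. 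A perturbation bound for stationary distributions (e.g. \cite{seneta1988perturbation}) then gives $\norm{\nu-\pi}_1 = O(p_J\norm{P_{IS}-P_{\textit{L\'evy}}}_1)$ since $P$ differs from $P_{IS}$ by $p_J(P_{\textit{L\'evy}}-P_{IS})$; squaring and summing the $O((t-s)^2)$ pairs yields the $2(t-s)^2 p_J^2\norm{P_{IS}-P_{\textit{L\'evy}}}_1^2\sigma_*^2 w_{\max}^2$ term.

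The main obstacle I anticipate is making the residual-bias estimate clean and quantitatively matching the stated constant: one must carefully argue that $\E_\nu[g]$ can be replaced by the zero-mean quantity under $\pi$, and then get a perturbation estimate $\norm{\nu-\pi}_1\lesssim p_J\norm{P_{IS}-P_{\textit{L\'evy}}}_1$ with the right dependence — this requires controlling the fundamental matrix / ergodicity coefficient of $P_{IS}$, which in turn may silently enter $\tau_{mix}$ or the constant $C$. A secondary subtlety is that $\E{g_j\mid v_i}$ uses the Markov property only with respect to the \emph{node sequence} $\{v_t\}$, which is indeed a Markov chain with kernel $P$ (the L\'evy-jump sub-walk is marginalized out into $P_{\textit{L\'evy}}$), so I would state that reduction explicitly at the start. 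Everything else — the Cauchy–Schwarz steps, the geometric summation, and collecting constants — is routine bookkeeping.
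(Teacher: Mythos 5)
Your proposal is correct in substance and reaches the same two-term structure as the paper, but it gets there by a partly different route. For the mixing term, the paper normalizes by $(t-s)^2$, splits off the stationary mean via $\norm{x+y}^2\le 2\norm{x}^2+2\norm{y}^2$, and invokes a black-box concentration inequality for Markov-chain averages (cited from Naor et al.) to get $\frac{C\tau_{mix}}{t-s}\sigma_*^2$ in one step; you instead expand the square into diagonal and cross terms and bound the cross terms directly through the geometric decay of $\norm{P^{j-i}(v_i,\cdot)-\nu}_{1}$, which is essentially an elementary, self-contained proof of the inequality the paper cites. For the bias term the two arguments coincide: both use $\mathbb{E}_{\pi}[w(v)\nabla f_v(x^*)]=0$, bound $\norm{\mathbb{E}_{\nu}[w(v)\nabla f_v(x^*)]}$ by $w_{\max}\sigma_*\norm{\nu-\pi}_{TV}$, and then apply Seneta's perturbation bound $\norm{\nu-\pi}_{TV}\le C\norm{P-P_{IS}}_1 = C p_J\norm{P_{IS}-P_{\textit{L\'evy}}}_1$. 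Your version buys transparency about where $\tau_{mix}$ enters; the paper's buys brevity.

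One step in your cross-term analysis deserves care: to obtain the bias term \emph{squared} (i.e.\ $p_J^2\norm{P_{IS}-P_{\textit{L\'evy}}}_1^2$), you cannot simply bound $\abs*{\E{\ip{g_i}{\bar g}}}$ by $\E{\norm{g_i}}\cdot\norm{\bar g}\le w_{\max}\sigma_*\norm{\bar g}$, which yields only the first power of $\norm{\nu-\pi}_1$. Since $\bar g$ is deterministic you should instead write $\E{\ip{g_i}{\bar g}}=\ip{\E{g_i}}{\bar g}$ and note that $\E{g_i}=\bar g$ exactly if the chain is started from (or has already reached) stationarity — the paper makes this same assumption in a footnote, arguing a burn-in of order $\tau_{mix}$ does not affect the rate — or equals $\bar g$ up to a geometrically decaying correction that folds into the mixing term otherwise. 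With that adjustment each cross pair contributes $\norm{\bar g}^2\le w_{\max}^2\sigma_*^2\norm{\nu-\pi}_{TV}^2$ and the quadratic dependence on $p_J\norm{P_{IS}-P_{\textit{L\'evy}}}_1$ follows as claimed; the rest of your outline is sound.
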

\begin{proof}[Proof of Lemma 2]
\begin{align*}
    &\frac{1}{(t-s)^2}\E{\left\Vert\sum_{i=s}^t w(v_i)\nabla f_{v_i}(x^*)\right\Vert^2}\\
    &=\E{\left\Vert\frac{1}{t-s}\sum_{i=s}^t w(v_i)\nabla f_{v_i}(x^*)\right\Vert^2}\\
    &\overset{(a)}{\leq}2\E{\left\Vert\frac{\sum_{i=s}^t w(v_i)\nabla f_{v_i}(x^*)}{t-s}
    -{\mathbb{E}_{\nu}\lbrack w(v)\nabla f_{v}(x^*)\rbrack}\right\Vert^2}\\
    &+2\left\Vert\mathbb{E}_{\nu}\lbrack w(v)\nabla f_{v}(x^*)\rbrack\right\Vert^2\\
    &\overset{(b)}{\leq} \frac{2\tau_{mix}\sigma^2_*}{c (t-s)}+2\left\Vert\mathbb{E}_{\nu}\lbrack w(v)\nabla f_{v}(x^*)\rbrack\right\Vert^2\\
    &\leq \frac{C}{t-s}\tau_{mix}\sigma^2_*+2\norm{\nu-\pi}_{TV}^2\sigma^2_* w_{max}^2,
    \end{align*}
    Where (a) follows from $\norm{x+y}^2\leq 2\norm{x}^2+2\norm{y}^2$, (b) follows from \cite{naor2020concentration}.\footnote{$\mathbb{E}\left\Vert\frac{1}{n}\sum_{i=1}^nf(X_i)-\mathbb{E}_{\pi}{f(X)}\right\Vert^2\leq \frac{2C\tau_{mix}}{n}\max_{x\in\mathcal{X}} \norm{f(x)}^2$, where $X_i$ are sampled from Markov chain on $\mathcal{X}$ with stationary distribution $\pi$. We can run the random walk for sufficient long time such that the chain converges to stationary first, which takes only $\mathcal{O}(\tau_{mix})$ time and won't affect the order of convergence.}, the last inequality uses $\mathbb{E}_{v\sim\pi}\left[w(v)\nabla f(x^*)\right ]=0$ and 
    \begin{align*}
        &\norm{\mathbb{E}_{v\sim\pi}[w(v)\nabla f_{v}(x^*)]-\mathbb{E}_{v\sim\nu}[w(v)\nabla f_{v}(x^*)]}^2\\
        &\leq \norm{\pi-\nu}_{TV}^2\cdot \max_v \norm{\nabla f_v(x^*)}^2 w_{max}^2.
    \end{align*}
    Further, upper bound the total variation distance by perturbation bound $\norm{\pi-\Tilde{\pi}}_{TV}\leq C\norm{P-\Tilde{P}}_1$ from \cite{seneta1988perturbation},
    then multiplying $(t-s)^2$ on both sides, we have the desired result.
\end{proof}
\subsection{Formal Proof}
Now, we are ready to prove Theorem 1.
\begin{proof}[Proof of Theorem 1]
Set $y^T=x^*$, from Lemma 1, we have
    \begin{align*}
    \norm{x^T-y^T}^2&\leq (1-\gamma\mu)^T\norm{x^0-y^0}^2\\&+\gamma \Bar{L}\sum_{t\leq T}(1-\gamma\mu)^{T-t}\norm{y^t-x^*}^2.
    \end{align*}
    Note that
    \begin{align*}
        y^0&=x^*+\gamma\sum_{t\leq T}w(v_t)\nabla f_{v_t}(x^*),\\
        y^t&=x^*+\gamma\sum_{t\leq s\leq T}w(v_s)\nabla f_{v_s}(x^*),
    \end{align*}
    we can upper bound:
    \begin{align*}
        &\E{\norm{x^T-x^*}^2}
        \leq 2(1-\gamma\mu)^T\norm{x^0-x^*}^2\\
        &+3\gamma^3\Bar{L}\sum_{t\leq T}(1-\gamma\mu)^{T-t}\E{\left\Vert\sum_{t\leq s\leq T}w(v_s)\nabla f_{v_s}(x^*)\right\Vert^2}.
        \end{align*}
        The third term is then upper bounded by Lemma 2 we have
        \begin{align}
            &\E{\norm{x^T-x^*}^2}\leq 2(1-\gamma\mu)^T\norm{x^0-x^*}^2\\ &+C_1\gamma^3\Bar{L}\sum_{t\leq T}(1-\gamma\mu)^{T-t}(T-t)\color{black}\tau_{mix}\sigma^2_*\\
        &+C_2p_J^2\norm{P_{IS}-P_{\textit{L\'{e}vy}}}^2_{1}\sigma^2_*\gamma^3\frac{\Bar{L}^3}{L_{min}^2}\sum_{t\leq T}(1-\gamma\mu)^{T-t}(T-t)^2.
        \end{align}
        Finally, use the numerical inequality $\sum_{t\leq T}(1-x)^t t\leq 1/x^2$ and $\sum_{t\leq T}(1-x)^t t^2\leq \frac{2}{x^3}$, we have
        \begin{align*}
            &\E{\norm{x^T-x^*}^2}\leq 2(1-\gamma\mu)^T\norm{x^0-x^*}^2\\ &+\frac{C\gamma^3\Bar{L}\tau_{mix}\sigma^2_*}{\mu^2\gamma^2}\\
        &+\frac{Cp_J^2\norm{P_{IS}-P_{\textit{L\'{e}vy}}}^2_{1}\sigma^2_*\Bar{L}^3}{\mu^3 L_{min}^2}. \end{align*}
        Choosing $\gamma=\min\{\frac{1}{\Bar{L}},\frac{1}{T\mu}\ln{T\frac{\norm{x^0-x^*}^2\mu^2}{C\tau_{mix}\sigma^2_*\Bar{L}}}\}$, we have
        the desired result.
\end{proof}
\subsection{Numerical Result}
 
 
\begin{figure*}[htbp]
\centering
\subfloat[]{\includegraphics[width=3in]{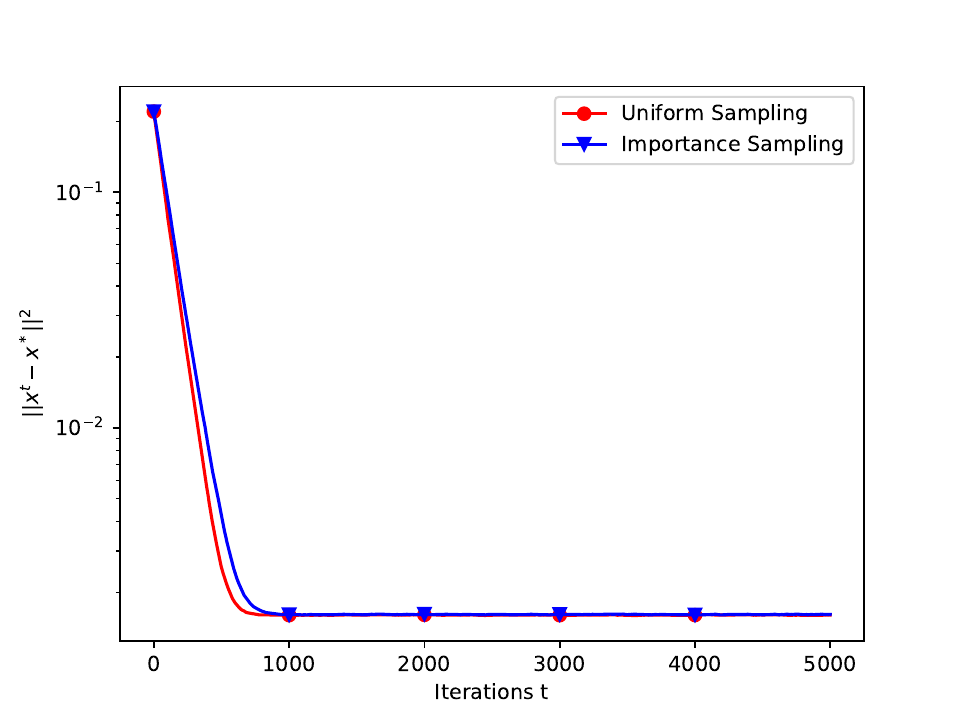}%
}
\hfil
\subfloat[]{\includegraphics[width=3in]{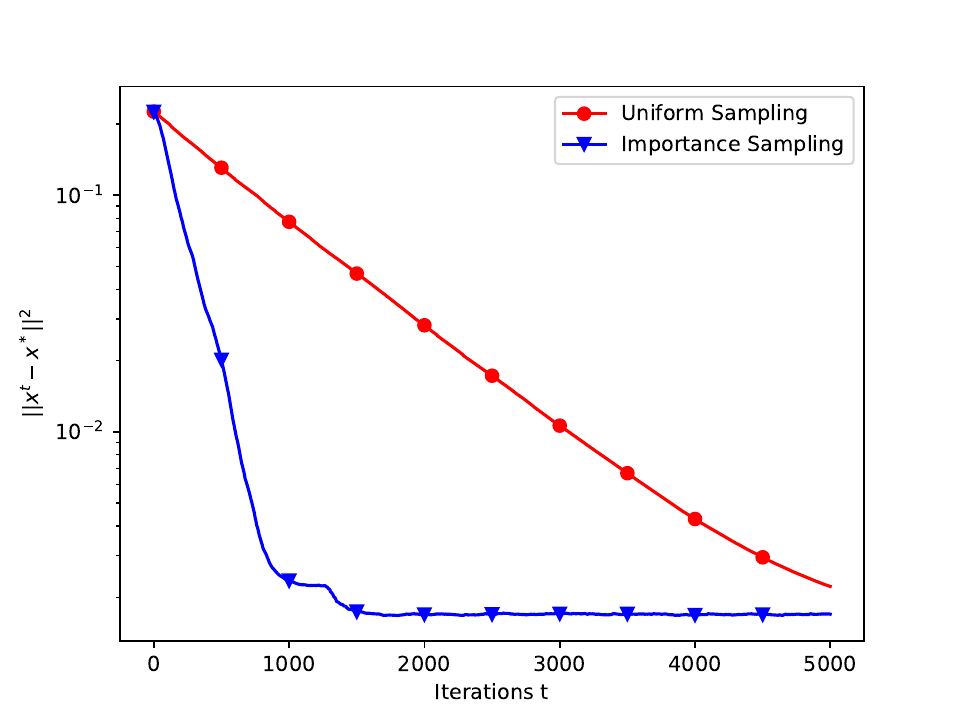}%
}
\caption{Regression model trained on a synthetic data set over a Erd\H{o}s-R\'{e}nyi (1000, 0.1) network with 1000
nodes. We compare the uniform sampling with Metropolis-Hastings transition probability and importance
sampling with Metropolis-Hastings transition probability. $\sigma^2_H=100$, $\sigma^2_L=1$. (a) Homogeneous Data. (b) Heterogeneous Data.}
\label{ER}
\end{figure*}

\begin{figure*}[htbp]
\centering
\subfloat[]{\includegraphics[width=3in]{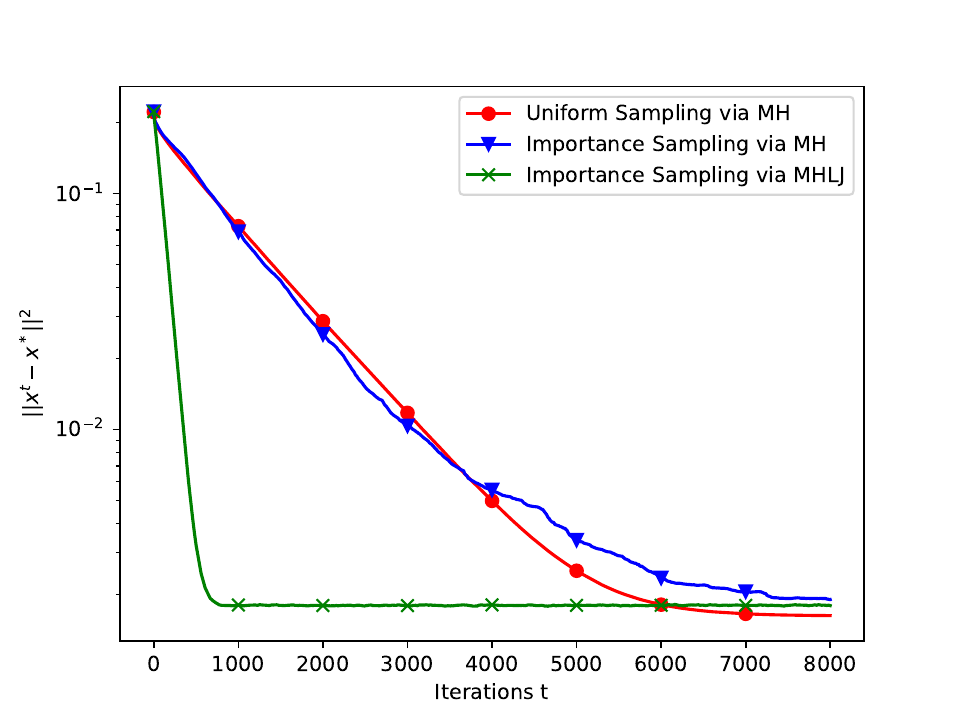}%
 }
\hfil
\subfloat[]{\includegraphics[width=3in]{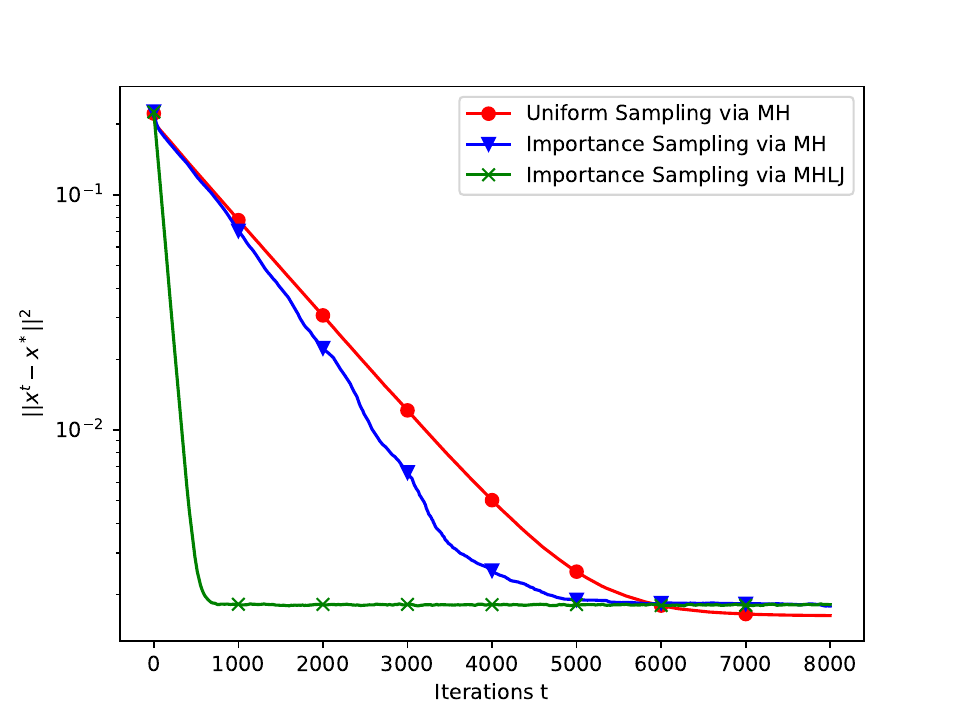}%
 }
\caption{Regression model
trained on a synthetic heterogeneous data set over sparse networks with 1000 nodes. We compare the
uniform sampling with Metropolis-Hastings transition probability, importance sampling with Metropolis-
Hastings transition probability and importance sampling with MHLJ. $\sigma^2_H=100$, $\sigma^2_L=1$. (a) 2-d grid. (b) Watts-Strogatz (1000, 4, 0.1) graph.}
\label{ERjump}
\end{figure*}
\begin{figure}[t]
    \centering
    \includegraphics[width=0.5\textwidth]{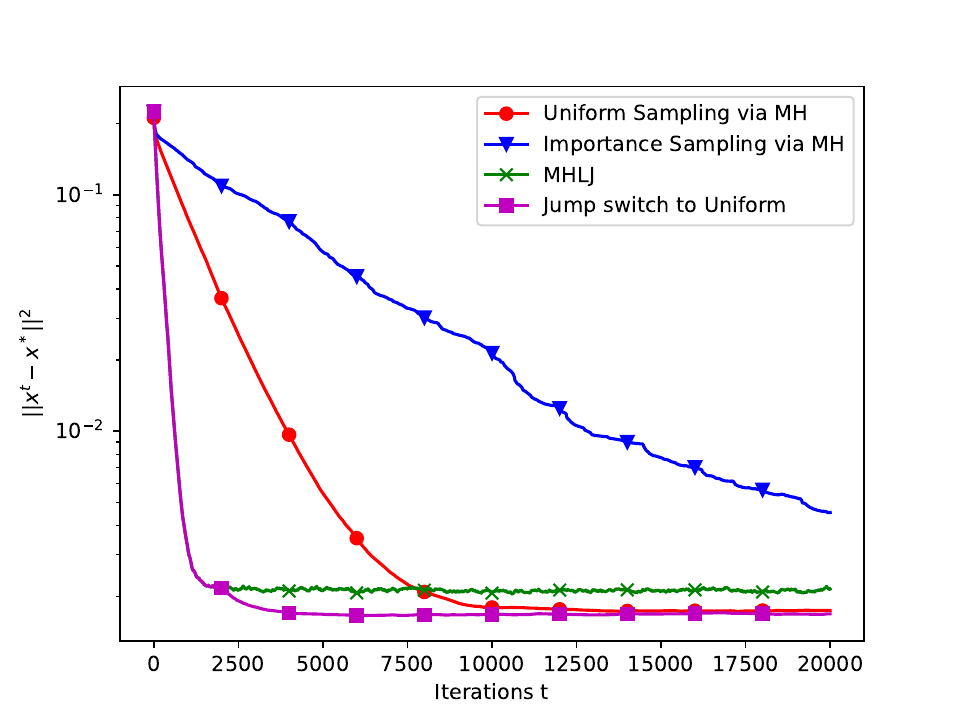}
    \caption{Shringking the jump probability $p_J$ towards zero eliminates the error gap introduced by making jumps without losing the convergence speed.}
    \label{fig:switch}
\end{figure}

We focus on the effect of designing the transition matrix on the convergence rate of decentralized SGD via random walk has the form as in \eqref{update}. We only consider time-homogeneous random walk learning in this work, that is the graph topology and transition probability of the random walk don't change over time. We compare our newly proposed algorithm to the Random Walk SGD algorithm using uniform sampling via \mh\  as in \cite{johansson2010randomized}; and using Importance sampling via \mh \ as in \cite{ayache2021private}, where the stationary distribution of the Metropolis-Hastings transition probability is the importance distribution $\pi_I(v)=\frac{L_v}{\sum_{v\in V}}$. 

We show the simulation results of the decentralized least square estimation problem on a graph $G=(V,E)$. The averaged loss function has the following formula:
\begin{align}
    f(x;\{A_v,y_v\}_{v\in V})=\frac{1}{|V|}\sum_{v\in V}(y_v-x^T A_v)^2,
\end{align}
where $\{A_v,y_v\}\in \mathbb{R}^d\times \mathbb{R}$ is the data stored at node $v$, the local loss function at node $v$ is:
\begin{align}
    f_v(x)=(y_v-x^TA_v)^2.
\end{align}
The local gradient Lipschitz constant is thus $L_v=2A_v^T A_v$.


We start from the Erd\H{o}s-R\'{e}nyi random graph. The simulation results show that random walk SGD using uniform sampling and importance sampling for the homogeneous data set have similar convergence rates, see  \cref{ER}.a. For the heterogeneous data set, importance sampling beats uniform sampling; see  \cref{ER}.b. However, when the data set is heterogeneous, and the graph is sparse, using importance sampling will lead to the entrapment problem, see  \cref{mainsim} in the main paper. We show first, on the ring graph, Importance sampling via \mh\  converges slower than Uniform sampling via \mh\ for the heterogeneous data set, which is contrary to what Importance sampling via \mh\  is designed for due to the entrapment. Our proposed algorithm converges significantly faster than Uniform sampling via \mh. Later, we show that the error gap can be eliminated by shrinking the probability of making jumps, $p_J$, towards zero; one note, in this case, is that by doing so, our algorithm converges to the correct optima without losing the convergence speed compared with the simple Metropolis-Hastings algorithm, see \cref{fig:switch}. We also consider other types of sparse random networks: 1. The 2-d grid graph, see \cref{ERjump}.a. 2. The Watts-Strogatz network, which is similar to regular graphs, has an average degree of order $\mathcal{O}(1)$, see \cref{ERjump}.b. We observe that for the heterogeneous data we consider, the entrapment problem happens on these sparse graphs and our MHLJ algorithm can overcome the entrapment problem and speed up the convergence.

We now give the detailed simulation setting.

\textit{Data:} 
The homogeneous data set $\{A_v,y_v\}_{v\in V}$ are generated in the following way:
\begin{itemize}
    \item $A_v\overset{\mathrm{i.i.d.}}{\sim} N_{10}(0,\sigma^2\mathbb{I}_{10})$.
    \item $y_v=A_v^T x+\epsilon$, where $\epsilon\overset{\mathrm{i.i.d.}}{\sim}N(0,1)$.
\end{itemize}
The heterogeneous data set $\{A_v,y_v\}_{v\in V}$ are generated in the following way: 
\begin{itemize}
    \item $A_v|\sigma^2\overset{\mathrm{i.i.d.}}{\sim} N_{10}(0,\sigma^2\mathbb{I}_{10})$, where $\sigma^2$ takes value $\sigma^2_L$ with probability $p=0.995$ and $\sigma^2_H$ with probability $p=0.005$.
    \item $y_v=A_v^T x+\epsilon$, where $\epsilon\overset{\mathrm{i.i.d.}}{\sim}N(0,1)$.
\end{itemize}
For each node $v$, we assign one data point $(X_v, y_v)$.

\textit{Step size:} We consider the constant step size to show the effect of importance sampling clearly. First, we choose the largest step size such that the random walk learning algorithm under uniform sampling converges. Then, we choose the step size such that the random walk learning algorithm under importance sampling converges to the same accuracy. Our new algorithm uses the same step size as in the importance sampling. 


 
\end{document}